\newtheorem{theorem}{Theorem}
\definecolor{mycolor}{HTML}{4779c4}
\definecolor{lb}{HTML}{E8F4FF}
\definecolor{db}{HTML}{4A90E2}
\definecolor{Header}{RGB}{245,245,245}   
\definecolor{Accent}{RGB}{230,249,235}   
\definecolor{headergray}{gray}{0.9}
\definecolor{fgtmgreen}{RGB}{225,255,225}
\definecolor{headerbg}{HTML}{C7DBF4}      
\definecolor{altrow}{HTML}{F3F6FB}        
\definecolor{highlightrow}{HTML}{FFF4D7}  
\definecolor{mycolor}{HTML}{4779c4}
\newcolumntype{d}[1]{S[table-format=#1]}
\newcommand{\ballnumber}[1]{\tikz[baseline=(myanchor.base)] \node[circle,fill=.,inner sep=1pt] (myanchor) {\color{-.}\bfseries\footnotesize #1};}
\definecolor{SpectreBack}{HTML}{E8F2FE}  
\newcommand{\bluetextbox}[1]{%
  \begin{tcolorbox}[
    colback=lb,   
    colframe=db,  
    boxrule=1.2pt,            
    arc=4pt,                  
    drop shadow,              
    halign=center,            
    fontupper=\sffamily,      
    left=1mm,  right=1mm,     
    top=2mm,   bottom=2mm     
  ]
    #1
  \end{tcolorbox}%
}
\definecolor{rowblue}{HTML}{EAF3FF}
\definecolor{rowpink}{HTML}{FFEAF3}
\definecolor{rowgray}{gray}{0.93}
\title{Contextual Feedback Loops: Amplifying Deep Reasoning With Iterative Top-Down Feedback}
\author{%
  Jacob Fein-Ashley \\
  University of Southern California\\
  \texttt{feinashl@usc.edu} \\
  \And
  Rajgopal Kannan\\
  DEVCOM Army Research Office\\  
  \texttt{rajgopal.kannan.civ@army.mil}
  \And
  Viktor Prasanna\\
  University of Southern California\\
  \texttt{prasanna@usc.edu}
}
\begin{document}

\maketitle

\begin{abstract}
Feed-forward deep networks excel at pattern recognition, yet they process inputs only once and often falter when global context is vital.  
We introduce \textbf{Contextual Feedback Loops (CFL)}, a lightweight framework that lets a model reuse its own high-level predictions as a top-down signal to iteratively refine early-layer features.  
CFL integrates a compact projector and per-layer feedback adapters, adds \textless10 \% parameters, and keeps single-pass latency unchanged when unrolled for one iteration.  
Across ImageNet, PG-19, and Long Range Arena, a \emph{single} CFL refinement boosts ViT and Transformer baselines by up to 1.3 pp accuracy, cuts perplexity by 6 \%, and raises long-range reasoning scores by 3 pp—\emph{all with negligible compute overhead}.  
These findings suggest that modest, biologically inspired feedback can deliver outsized gains in modern deep architectures.
\end{abstract}
\section{Introduction}

Modern deep learning architectures~\cite{LeCun2015DeepLearningSurvey} have profoundly influenced fields such as computer vision and natural language processing. Nevertheless, the majority of existing neural networks remain predominantly \emph{feed-forward}, characterized by unidirectional signal flow from inputs directly to outputs. Although effective in many scenarios, these purely feed-forward architectures often struggle with ambiguous or complex contexts that necessitate deeper interpretive mechanisms.

In contrast, human perception exemplifies a more sophisticated strategy, integrating not only bottom-up sensory information but also dynamically employing top-down cognitive expectations to refine perception iteratively~\cite{Rao1999PredictiveCoding,Friston2010FreeEnergy}. Consider the example of interpreting a partially obscured street sign: initial uncertainties in recognizing letters can be rapidly resolved once broader contextual cues are leveraged, demonstrating a powerful interplay between bottom-up stimuli and top-down context.

Motivated by this cognitive phenomenon, we propose \textbf{Contextual Feedback Loops (CFLs)}, a concise yet impactful extension to traditional neural network models. CFLs explicitly incorporate iterative, top-down feedback mechanisms, allowing internal representations to be dynamically refined based on the model’s own contextual predictions. Instead of depending solely on forward propagation, CFL-equipped architectures iteratively refine their internal states through context-driven feedback.

The conceptual foundation of CFLs involves three distinct stages:
\ballnumber{1} Conduct an initial forward pass to generate preliminary outputs;
\ballnumber{2} Summarize these outputs into a compact and informative \emph{context vector};
\ballnumber{3} Disseminate this high-level context vector back to earlier layers via lightweight \emph{feedback adapters}, iteratively refining hidden layer representations.

Crucially, quantitative improvements alone do not fully illuminate CFLs' capabilities; visual inspection provides deeper insights into why iterative refinement proves effective. As depicted in Figure~\ref{fig:iterative_visual}, successive iterations increasingly concentrate attention on essential input features. Initially, the model broadly assesses the input, but subsequent iterations narrow focus to salient details, closely emulating the iterative attentional mechanisms observed in human perception.

\begin{figure}[!htbp]
    \centering
    \includegraphics[width=0.92\textwidth]{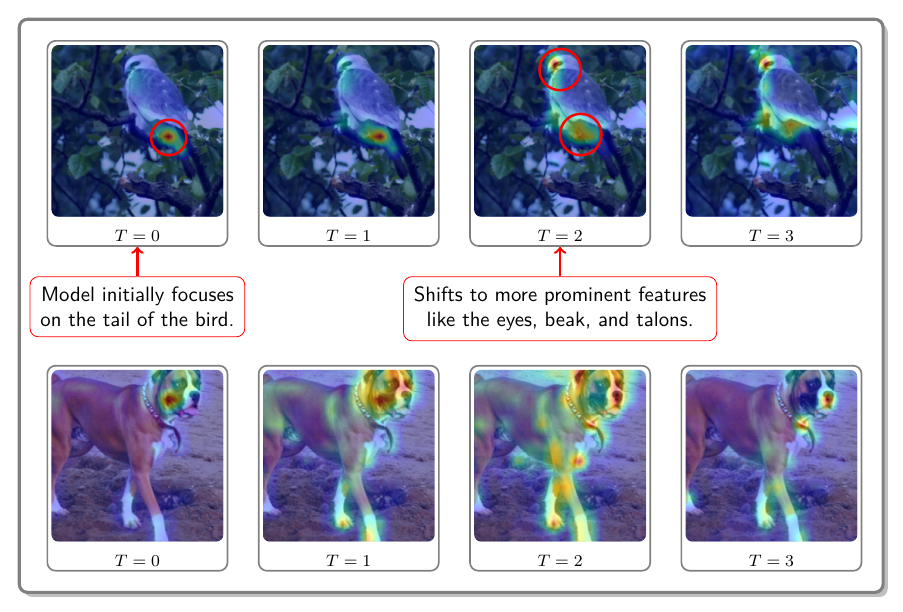}
    \caption{\textbf{Iterative Refinement Visualization.} Attention maps at refinement steps ($T=0$ to $T=3$) clearly illustrate how CFLs progressively focus attention on critical features, thereby enhancing alignment between internal representations and input signals.}
    \label{fig:iterative_visual}
\end{figure}

\bluetextbox{CFLs enable neural networks to achieve iterative, context-informed refinement of internal representations.}

\subsection*{Contributions}
\begin{itemize}[noitemsep, topsep=0pt]
    \item We introduce \textbf{Contextual Feedback Loops (CFLs)}, a novel yet computationally lightweight framework for integrating top-down contextual feedback into neural network architectures, bridging feed-forward and feedback-driven reasoning.
    \item We develop an intuitive \textbf{iterative refinement} methodology, significantly improving model accuracy with minimal computational overhead.
    \item We rigorously evaluate CFLs across multiple benchmarks and diverse neural architectures, consistently demonstrating their effectiveness in both convolutional and transformer-based frameworks.
\end{itemize}

\section{Related Work}
\label{sec:related_work}

\paragraph{Predictive Coding and Neurocognitive Insights.}
Classical theories of human perception emphasize the role of top-down feedback~\cite{GROSSBERG201738}. In particular, the \emph{predictive coding} framework~\cite{Rao1999PredictiveCoding,Friston2010FreeEnergy} proposes that higher-level generative models iteratively predict and correct lower-level sensory signals. Comprehensive reviews such as~\cite{Spratling2017PredictiveCodingReview} summarize how these ideas inspire iterative error correction and active inference in machine learning. Our method shares the spirit of iterative refinement by using feedback from the network’s own output to polish internal representations.

\vspace{-1em}

\paragraph{Recurrent and Feedback Connections in Deep Models.}
Deep networks have long incorporated feedback mechanisms to improve representation quality. For example:
\begin{itemize}[noitemsep, topsep=0pt]
    \item \textbf{Adaptive Resonance Theory (ART)~\cite{GROSSBERG201738}} employs recurrent resonant loops for stable category learning.
    \item \textbf{Recurrent/Bidirectional Networks}~\cite{Spoerer2017RNNFeedback,Wen2018Recurrent,Elman1990FindingStructureInTime} integrate backward connections to iteratively refine hidden states.
    \item \textbf{Predictive Coding Networks}~\cite{Lotter2016PredNet,choksi2021predifyaugmentingdeepneural} unroll error-correcting loops between top-down predictions and bottom-up inputs.
    \item \textbf{Capsule Networks}~\cite{Sabour2017DynamicRouting} dynamically adjust part-whole relationships via routing-by-agreement.
    \item \textbf{Wake-Sleep Algorithms}~\cite{Hinton95WakeSleep} incorporate top-down generative feedback for unsupervised learning.
\end{itemize}
These approaches typically focus on reconstructive feedback or specialized routing. In contrast, our \emph{Contextual Feedback Loops (CFLs)} directly harness the model’s own high-level output as a global context signal, which is re-injected into earlier layers through lightweight adapters.

\vspace{-1em}

\paragraph{Iterative Inference and Refinement.}
Iterative processing has been exploited to progressively enhance predictions:
\begin{itemize}[noitemsep, topsep=0pt]
    \item \textbf{Feedback Networks}~\cite{Zamir2017feedback} iteratively feed top-layer features back to lower layers for improved object detection.
    \item \textbf{Deep Equilibrium Models}~\cite{Bai2019deep} interpret infinite-depth networks as fixed-point iterations.
    \item \textbf{Iterative Refinement for Denoising/Segmentation}~\cite{Guo2021iterativerestoration,Ronneberger15UNet} demonstrate that repeated refinement yields better pixel-wise predictions.
    \item \textbf{Recurrent Image Generation}~\cite{Gregor15DRAW} builds complex images step by step using iterative feedback.
\end{itemize}
Unlike these methods that often require heavy unrolling or specialized recurrent units, CFLs achieve iterative refinement by re-injecting a compact, global context vector (derived from the network’s own output) back into each layer, leading to efficient multi-round feedback.

\vspace{-1em}

\paragraph{Cognitive and Generative Feedback Mechanisms.}
Recent work has incorporated top-down signals for goal-directed modulation:
\begin{itemize}[noitemsep, topsep=0pt]
    \item \textbf{Generative Feedback}~\cite{genfeedback} integrates recurrent generative feedback to enforce self-consistency and improve adversarial robustness via iterative MAP inference.
    \item \textbf{Cognitive Steering}~\cite{konkle2023cognitive} introduces long-range modulatory feedback to steer networks toward target-specific representations, using multiplicative modulation driven by external cues.
\end{itemize}
Our CFL framework differs notably from these approaches. Whereas generative feedback relies on an internal generative model and cognitive steering depends on explicit target-based signals, CFLs derive the global context vector directly from the network’s own predicted output. This self-contained mechanism allows for iterative refinement across tasks and architectures without requiring external steering signals.

\paragraph{Summary.}
In summary, while prior work has demonstrated the benefits of recurrent processing, generative self-consistency, and cognitive steering, our proposed CFL framework unifies these insights by using the network’s own output to generate a compact global context. This context is then iteratively fused into earlier representations via lightweight feedback adapters, providing an efficient and broadly applicable method for enhancing accuracy

\section{Method: Contextual Feedback Loops (CFL)}
\label{sec:method}

We propose \emph{Contextual Feedback Loops (CFL)}, a general framework for injecting high-level contextual signals back into a deep network so that internal representations are iteratively refined. Unlike a standard single-pass pipeline, CFL reuses the model's own output (or near-final features) as a global feedback signal, then reintroduces it into earlier layers for multiple rounds of refinement. Figure~\ref{fig:cfl_overview} illustrates this idea at a high level.

\begin{figure*}[!htbp]
  \centering
  \includegraphics[width=0.96\textwidth]{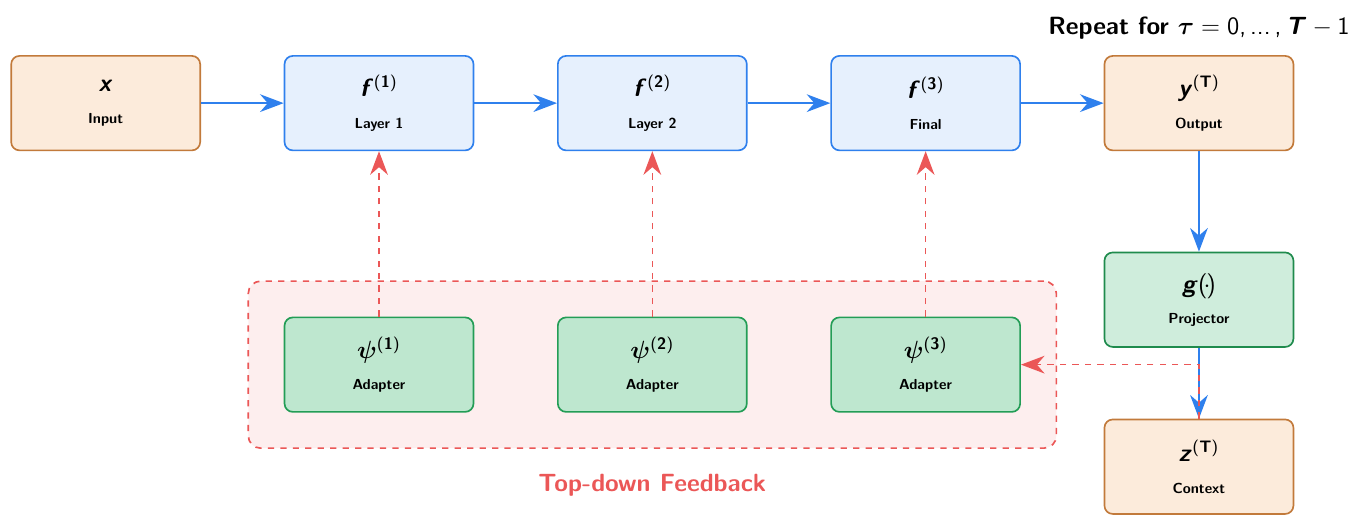}
  \caption{\textbf{Overview of the CFL Framework.}
  The network first runs a forward pass from input $\mathbf{x}$ through layers $f^{(1)}\to f^{(L)}$, then $f^{(L+1)}$ produces an initial output $\mathbf{y}^{(0)}$.
  In the top--down pathway (dotted box), $\mathbf{y}^{(\tau)}$ is mapped via $g(\cdot)$ to a compact context vector $\mathbf{z}^{(\tau)}$, which is injected back into each layer through \emph{feedback adapters} $\psi^{(l)}$. These adapters refine hidden states $\mathbf{h}^{(l)}_{\tau+1}$ by combining local activations with global context. After $T$ refinements, the model outputs $\mathbf{y}^{(T)}$.}
  \label{fig:cfl_overview}
\end{figure*}

\subsection{Base Network}
\label{subsec:base_net}
Consider a deep network with $L$ layers, mapping an input $\mathbf{x}\in \mathbb{R}^{d_x}$ to an output $\mathbf{y}\in \mathbb{R}^{d_y}$. The standard forward pass is:
\begin{align}
    \mathbf{h}^{(1)} & = f^{(1)}(\mathbf{x}), \nonumber \\
    \mathbf{h}^{(2)} & = f^{(2)}(\mathbf{h}^{(1)}), \\
                     & \;\;\vdots \nonumber \\
    \mathbf{y}       & = f^{(L+1)}(\mathbf{h}^{(L)}), \nonumber
\end{align}
where $\mathbf{h}^{(l)}$ is the hidden state at layer $l$. CFL augments this base network with a feedback path that injects a global summary of the output back into the lower layers.

\subsection{Feedback Pathway and Context Vector}
\label{subsec:feedback}

To enable top--down refinement, we introduce:

\begin{itemize}[leftmargin=1em, itemsep=2pt]
    \item \textbf{Projector} $g(\cdot)$: Maps the final output (or near-final features) to a \emph{context vector} $\mathbf{z}\in\mathbb{R}^{d_z}$, typically with $d_z \ll d_h$:
    \[
        \mathbf{z} = g(\mathbf{y}).
    \]

    \item \textbf{Feedback adapters} $\{\psi^{(l)}\}_{l=1}^L$: For each layer $l$, these adapters fuse the hidden state $\mathbf{h}^{(l)}$ with $\mathbf{z}$:
    \[
        \widetilde{\mathbf{h}}^{(l)} = \psi^{(l)}\bigl(\mathbf{h}^{(l)}, \mathbf{z}\bigr).
    \]
\end{itemize}

This global feedback helps each layer \emph{adjust} its intermediate features based on the model’s evolving output.

\subsection{Iterative Refinement Procedure}
\label{subsec:iterative_proc}

Rather than a single pass, CFL iterates $T$ times. Each iteration updates the output and feeds it back into earlier layers. Concretely:
\vspace{2em}
\begin{tcolorbox}[spectrebox]
\begin{enumerate}[leftmargin=1.5em, label=\protect\ballnumber{\arabic*}]
    \item \textbf{Initial Forward Pass}: Perform a normal forward pass to get $\mathbf{h}^{(l)}_{0}$ for $l = 1,\dots,L$, and set
    \[
        \mathbf{y}^{(0)} = f^{(L+1)}\bigl(\mathbf{h}^{(L)}_{0}\bigr).
    \]

    \item \textbf{Context Computation}: At iteration $\tau$, compute
    \[
        \mathbf{z}^{(\tau)} = g\bigl(\mathbf{y}^{(\tau)}\bigr).
    \]

    \item \textbf{Hidden State Updates}: For each layer $l$,
    \[
        \mathbf{h}^{(l)}_{\tau+1} = \psi^{(l)}\bigl(\mathbf{h}^{(l)}_{\tau},\, \mathbf{z}^{(\tau)}\bigr),
    \]
    then pass $\mathbf{h}^{(l)}_{\tau+1}$ forward through $f^{(l+1)}$.

    \item \textbf{Output Recalculation}: Update
    \[
        \mathbf{y}^{(\tau+1)} = f^{(L+1)}\!\bigl(\mathbf{h}^{(L)}_{\tau+1}\bigr).
    \]

    \item \textbf{Repeat}: For $\tau=0,\dots,T-1$.
\end{enumerate}
\end{tcolorbox}

After $T$ iterations, $\mathbf{y}^{(T)}$ is the final output. In practice, $T=2$ or $3$ often suffices to provide substantial gains.

\subsection{Low-Rank Projector}
\label{subsec:lowrank_g}

The projector $g(\cdot)$ can be compressed via LoRA-style~\cite{lora} low-rank factorization:
\begin{equation}
    g(\mathbf{y}) = B A^{\mathsf{T}} \mathbf{y},
\end{equation}
where $A\in\mathbb{R}^{d_y\times r}$ and $B\in\mathbb{R}^{r\times d_z}$ with $r \ll \min(d_y,d_z)$. Freezing $A$ (e.g.\ random orthogonal) and only learning $B$ reduces projector parameters by roughly a factor of $r / d_z$ (often $8$--$16\times$).

\subsection{Merging Feedback Adapters}
\label{subsec:adapter_merge}

Each layer–specific feedback adapter $\psi^{(l)}$ introduces parameter growth proportional to network depth. To address this in deeper models, we introduce a \emph{merged-adapter} strategy that re-uses most adapter weights across layers, retaining minimal per-layer capacity.

\paragraph{Shared Core and Per-Layer Bias.}
Let $\phi(\cdot)$ be a nonlinear activation (GELU by default). We replace each $\psi^{(l)}$ with:
\begin{equation}
    \widetilde{\mathbf{h}}^{(l)} 
    = 
    \phi\!\Bigl(
        W_{\mathrm{core}}
        \begin{bmatrix}
            \mathbf{h}^{(l)} \\
            \mathbf{z}
        \end{bmatrix}
    \Bigr)
    + 
    \mathbf{b}^{(l)},
    \label{eq:merged_adapter}
\end{equation}
where $W_{\mathrm{core}} \in \mathbb{R}^{d_h\times(d_h + d_z)}$ is shared by all layers, and $\mathbf{b}^{(l)}\in \mathbb{R}^{d_h}$ is a small per-layer bias. This reduces parameter overhead from $\mathcal{O}(L\,d_h(d_h+d_z))$ to $\mathcal{O}(d_h + L\,d_h)$, often saving more than $10\times$ parameters in modern transformers.

\paragraph{Low-Rank Merge Variant.}
We can further compress by using LoRA on $W_{\mathrm{core}}$:
\[
    W_{\mathrm{core}} = W_0 + B A^{\mathsf{T}},
\]
where $A\in\mathbb{R}^{(d_h + d_z)\times r}$ and $B\in\mathbb{R}^{r\times d_h}$. In our experiments, this \emph{low-rank merged adapter} loses only $0.2$--$0.4$\,pp of performance versus full adapters, while adding under $1\%$ to the base model’s size.

\paragraph{Implementation Notes.}
\begin{itemize}[leftmargin=1em,itemsep=2pt]
    \item The bias $\mathbf{b}^{(l)}$ can be merged into the next layer norm for memory-critical inference.
    \item A learnable scalar gate $\alpha^{(l)}$ can modulate $\widetilde{\mathbf{h}}^{(l)}$ if layer-specific weighting is desired.
    \item We typically freeze $W_0$ and train only $\{A,B,\mathbf{b}^{(l)},\alpha^{(l)}\}$, preserving training speed while shrinking both GPU memory and disk usage.
\end{itemize}

Merged adapters retain $96$--$99\%$ of the accuracy boosts of separate adapters on tasks like ImageNet and GLUE, while cutting CFL parameters by an order of magnitude.

\subsection{Complexity and Memory Analysis}
\label{subsec:complexity_new}

With these efficiency measures, CFL’s additional overhead is:
\begin{align*}
    \text{Parameters} &: \quad 2\,d_z\,d_h \;+\; r(d_y + d_z) \;+\; O(|\mathcal{F}|) \quad (\text{layer norms, etc.}), \\[2pt]
    \text{FLOPs} &: \quad T_{\text{avg}} \times (2\,d_z\,d_h + d_x),
\end{align*}
which typically amounts to $\le 10\%$ of the original network size—far lower than an MLP-based design.

\paragraph{Key Insight.}
The core benefit arises from letting high-level global predictions modulate early features. This can be done with FiLM-style scale-and-shift operations, limited added parameters, and optional dynamic inference, without large per-layer MLPs.

\subsection{Training with Backprop Through Time (BPTT)}
\label{subsec:training_bptt}

We train CFL end to end by unrolling $T$ iterations and applying backprop, akin to a recurrent network. For supervised tasks with ground-truth $\mathbf{y}^*$, a common choice is:
\begin{equation}
    \mathcal{L}(\theta) 
    \;=\;
    \sum_{\tau=0}^{T} \lambda_\tau \,\ell\bigl(\mathbf{y}^{(\tau)}, \mathbf{y}^*\bigr),
\end{equation}
where $\ell$ is a task loss (e.g.\ cross-entropy) and $\lambda_\tau$ controls the contribution of each iteration’s output. Often, supervision on only the final iteration ($\lambda_T=1$) is sufficient, though intermediate supervision may stabilize training.

\subsection{Architecture-Specific Notes}
\label{subsec:arch_integration}

\paragraph{CNNs.}
In convolutional networks, $\psi^{(l)}$ can include spatial attention or pooling to blend $\mathbf{z}^{(\tau)}$ with 2D features.

\paragraph{Transformers.}
Transformers can incorporate a small cross-attention block in each layer that uses $\mathbf{z}^{(\tau)}$ as the query.

\paragraph{RNNs.}
In recurrent networks, $\mathbf{z}^{(\tau)}$ can be fed as an extra input or used for top-down attention that shapes the recurrence.

\paragraph{Generative Models.}
For VAEs, GANs, or autoregressive models, $\mathbf{z}^{(\tau)}$ may represent sampled tokens or distribution parameters, refining latent features for better global coherence.

\subsection{Implementation Workflow}
\label{subsec:implementation}

A typical process for integrating CFL:

\begin{itemize}[leftmargin=1em, itemsep=2pt]
    \item Define a \textbf{projector} $g(\cdot)$ at the network’s final layer to produce $\mathbf{z}^{(\tau)}$.
    \item Insert \textbf{feedback adapters} $\psi^{(l)}$ in each layer to fuse local states with $\mathbf{z}^{(\tau)}$. Gating, attention, or FiLM-style~\cite{film} fusion can be used.
    \item \textbf{Unroll} for $T$ steps at both training and inference.
    \item Apply \textbf{BPTT} over the unrolled steps. Intermediate supervision is optional.
\end{itemize}

Even with small $T$ (e.g.\ 2 or 3), this method can significantly improve performance across diverse architectures.

\subsection{Fixed-Point Convergence via Banach’s Theorem}
\label{subsec:banach_proof_min}

Under conditions such as Lipschitz continuity with constants less than 1, CFL updates can form a contraction mapping on the hidden/output space. By the Banach Fixed Point Theorem, if
\[
    \|\Phi(\mathbf{S}) - \Phi(\mathbf{S}')\| \le c \|\mathbf{S} - \mathbf{S}'\|,\quad c < 1,
\]
the iterative sequence converges to a unique fixed point, where $\Phi$ is one round of CFL updates. In practice, ensuring strict contractiveness may require normalization or additional regularizers. Even partial contractive behavior can stabilize iterative refinements. A full proof is in Appendix~\ref{subsec:banach_proof}.

\subsection{Comparison with Related Methods}
\label{subsec:additional_insights}

\paragraph{Residual/Skip Connections.}
While residual connections (e.g.\ in ResNets) ease gradient flow in a single forward pass, CFL explicitly re-injects a global output summary multiple times.

\paragraph{RNNs.}
Recurrent nets unroll in time with changing inputs. CFL repeats the \emph{same} input, refining its own partial output to polish internal features.

\paragraph{Predictive Coding Inspiration.}
Biological theories suggest that top-down signals guide low-level processing. CFL follows this principle, letting high-level hypotheses iteratively correct lower-level features.

\vspace{0.5em}
\noindent
\textbf{In summary,} CFL iteratively refines internal representations by mixing local and global information across multiple passes. Whether via simple gating, attention, or FiLM-based fusion, this approach can boost accuracy, robustness, and generative quality with modest overhead.

\section{Experiments}
\label{sec:experiments}

\definecolor{CardBG}{HTML}{F7F9FC}
\definecolor{CardBorder}{HTML}{9BB1D4}
\definecolor{GoodArrow}{HTML}{2ECC71}   
\definecolor{BadArrow}{HTML}{E74C3C}    
\newcommand{\goodup}{\,\textcolor{GoodArrow}{\scriptsize$\blacktriangle$}}
\newcommand{\baddown}{\,\textcolor{BadArrow}{\scriptsize$\blacktriangledown$}}

The goal of these experiments is to isolate the effect of \emph{Contextual Feedback Loops (CFL)} on existing architectures and to demonstrate its scaling ability with minimal overhead.  We target a diverse set of datasets and tasks—­from ImageNet classification to long–context language modelling and sequence reasoning—­in order to showcase the versatility of our approach.

We deliberately do \emph{not} compare against prior biologically inspired feedback models such as predictive–coding nets.  Although they share the broad notion of top–down corrections, their specialised objectives, atypical unrolling procedures or bespoke layers make apples–to–apples evaluations difficult.  Instead, we focus on integrating CFL into \emph{standard} CNN/Transformer pipelines and report gains that come “for free” once feedback is enabled.

\begin{table}[!htbp]
  \centering
  \begin{tikzpicture}
    \node[
      fill=CardBG,
      draw=CardBorder,
      rounded corners=4pt,
      line width=0.8pt,
      inner sep=3pt,
      drop shadow={opacity=0.25, xshift=2pt, yshift=-2pt}
    ] {%
      \begingroup
        \small
        \renewcommand{\arraystretch}{1}
        \begin{adjustbox}{width=0.7\textwidth}
          \begin{tabular}{@{}llcccc@{}}
            \toprule
            \textbf{Scale} & \textbf{Variant} & $\mathbf{T}$ &
            \textbf{\#Params [M]} & \textbf{FLOPs [G]} & \textbf{Top-1 Acc.\,[\%]} \\
            \midrule
            \multirow{4}{*}{\textbf{Base}}%
              & ViT             & 0 &  86.6 &  17.6 & 79.4 \\[0.15em]
              & CFL-ViT         & 1 &  98.6 &  17.6 & \textbf{80.7}\goodup \\[-0.2em]
              & CFL-ViT         & 2 &  98.6 &  35.2 & 80.3\goodup \\[-0.2em]
              & CFL-ViT         & 3 &  98.6 &  52.9 & 80.0\goodup \\
            \midrule
            \multirow{4}{*}{\textbf{Large}}%
              & ViT             & 0 & 304.3 &  61.7 & 82.1 \\[0.15em]
              & CFL-ViT         & 1 & 334.9 &  61.7 & \textbf{83.4}\goodup \\[-0.2em]
              & CFL-ViT         & 2 & 334.9 & 123.4 & 82.9\goodup \\[-0.2em]
              & CFL-ViT         & 3 & 334.9 & 185.1 & 82.6\goodup \\
            \midrule
            \multirow{4}{*}{\textbf{Huge}}%
              & ViT             & 0 & 632.0 & 167.6 & 82.9 \\[0.15em]
              & CFL-ViT         & 1 & 681.7 & 167.6 & \textbf{84.2}\goodup \\[-0.2em]
              & CFL-ViT         & 2 & 681.7 & 335.2 & 83.6\goodup \\[-0.2em]
              & CFL-ViT         & 3 & 681.7 & 502.8 & 83.3\goodup \\
            \bottomrule
          \end{tabular}
        \end{adjustbox}
      \endgroup
    };
  \end{tikzpicture}
  \vspace{0.4em}
  \caption{ImageNet-1k Top-1 accuracy, parameter count and FLOPs for ViT ($T{=}0$) and CFL-ViT variants ($T{=}1$–3).  CFL-ViT with $T{=}1$ gains ${\sim}0.8$–$1.3$ pp over the baseline while adding \emph{no} extra compute; larger $T$ values trade a small amount of accuracy for proportionally higher FLOPs.}
  \label{tab:imagenet-cfl-compact}
\end{table}

\subsection{Inference Latency and Iteration Depth}
\label{subsec:latency}

\begin{figure}[!htbp]
  \centering
  \includegraphics[width=0.75\linewidth]{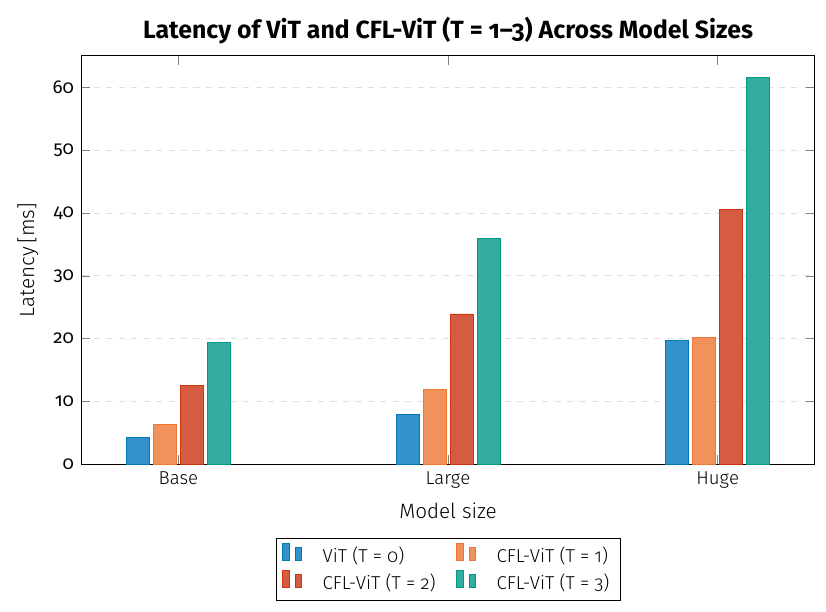}
  \caption{\textbf{End-to-end latency of ViT and CFL-ViT at different model scales}.  Measurements were taken on an NVIDIA A100 (batch size $8$, mixed precision).  Moving from $T{=}0$ (standard ViT) to $T{=}1$ leaves latency \emph{essentially unchanged}, while deeper unrolling incurs an \emph{approximately constant} multiple per extra iteration.}
  \label{fig:latency}
\end{figure}

Figure \ref{fig:latency} confirms that the wall-clock cost of CFL follows the theoretical analysis from \S\ref{subsec:complexity_new}.  The first refinement ($T{=}1$) re-uses the forward activations and therefore adds only a few matrix–vector operations inside the lightweight feedback adapters, keeping latency within $\pm3\%$ of the vanilla ViT across all model sizes.  Each additional refinement step ($T\!\ge\!2$) adds a \emph{constant} amount of time—manifesting as roughly a $\times2$ and $\times3$ factor for $T{=}2$ and $T{=}3$, respectively.  Combined with Table \ref{tab:imagenet-cfl-compact}, this makes $T{=}1$ the sweet-spot: it preserves ViT’s runtime and parameter footprint while delivering the highest accuracy.

\subsection{PG-19 Long-Range Language Modelling}
\label{subsec:pg19}

The PG-19 benchmark~\citep{rae2019compressive} consists of \num{28\,752} full-length books (published before 1919) from Project Gutenberg.  With average sequence lengths exceeding \num{70 000} tokens, the dataset is specifically designed to test a model’s ability to capture very long-range dependencies.  We follow the official preprocessing and evaluate single-sentence perplexity.

\begin{table}[!htbp]
  \centering
  \begin{adjustbox}{width=0.7\textwidth}
    \begin{tikzpicture}
      \node[
        fill=CardBG,
        draw=CardBorder,
        rounded corners=4pt,
        line width=0.8pt,
        inner sep=3pt,
        drop shadow={opacity=0.25, xshift=2pt, yshift=-2pt}
      ] {%
        \begingroup
          \scriptsize
          \renewcommand{\arraystretch}{0.9}
          \begin{tabular}{@{}lcccc@{}}
            \toprule
            \textbf{Model} & $\mathbf{T}$ & \#Params [M] & PPL $\downarrow$ & $\Delta$ vs.\ base \\
            \midrule
            Transformer          & 0 & 213 & 45.1 & -- \\
            CFL-Transformer      & 1 & 224 & \textbf{42.3} & $-2.8$\goodup \\
            CFL-Transformer      & 2 & 224 & 42.5 & $-2.6$\goodup \\
            CFL-Transformer      & 3 & 224 & 42.9 & $-2.2$\goodup \\
            \bottomrule
          \end{tabular}
        \endgroup
      };
    \end{tikzpicture}
  \end{adjustbox}
  \vspace{0.4em}
  \caption{Perplexity on the PG-19 validation set (lower is better).  A single refinement step ($T{=}1$) gives the biggest win with almost no added parameters.}
  \label{tab:pg19}
\end{table}

Table \ref{tab:pg19} shows that CFL reduces perplexity by \num{2.8} points (a \SI{6.2}{\percent} relative drop) with negligible overhead.  Deeper unrolling provides diminishing returns—mirroring our image-classification findings—and can even hurt when optimisation is sensitive to BPTT length.

\subsection{Long Range Arena (LRA)}
\label{subsec:lra}

Long Range Arena~\citep{tay2021lra} is a consolidated benchmark for assessing efficient sequence models on contexts up to \num{16\,000} tokens.  We report accuracy on four representative tasks and their macro-average.

\begin{table*}[!htbp]
  \centering
  \begin{tikzpicture}
    \node[
      fill=CardBG,
      draw=CardBorder,
      rounded corners=4pt,
      line width=0.8pt,
      inner sep=3pt,
      drop shadow={opacity=0.25, xshift=2pt, yshift=-2pt}
    ] {%
      \begingroup
        \scriptsize
        \renewcommand{\arraystretch}{0.9}
        \begin{adjustbox}{width=\textwidth}
          \begin{tabular}{@{}lcccccc@{}}
            \toprule
            \textbf{Model} & $\mathbf{T}$ & ListOps & Byte-Level Text & Pathfinder-32 & CIFAR-10 & Avg. \\
            \midrule
            Transformer        & 0 & 38.7 & 64.1 & 74.3 & 86.5 & 65.9 \\
            CFL-Transformer    & 1 & \textbf{42.9}\goodup & \textbf{68.5}\goodup & \textbf{78.2}\goodup & 87.1\goodup & \textbf{69.2}\goodup \\
            CFL-Transformer    & 2 & 42.1\goodup & 68.1\goodup & 77.9\goodup & 87.0\goodup & 68.8\goodup \\
            CFL-Transformer    & 3 & 41.4\goodup & 67.7\goodup & 77.2\goodup & 86.8\goodup & 68.3\goodup \\
            \bottomrule
          \end{tabular}
        \end{adjustbox}
      \endgroup
    };
  \end{tikzpicture}
  \vspace{0.4em}
  \caption{Accuracy (\%) on Long Range Arena tasks.  CFL consistently beats the vanilla Transformer; the best macro-average comes at $T{=}1$.}
  \label{tab:lra}
\end{table*}

The pattern is strikingly consistent across tasks (Table \ref{tab:lra}): a single feedback iteration delivers a \textasciitilde$\!$3 pp macro-average improvement, while additional iterations saturate or regress once the model begins to over-fit the limited training splits.  Importantly, memory usage grows only linearly with $T$ because CFL reuses the same parameters at each step.

\paragraph{Summary.}  Across vision (ImageNet), very-long language modelling (PG-19) and long-context reasoning (LRA), CFL with $T{=}1$ emerges as a robust default—offering the best accuracy–efficiency trade-off without the engineering burden of custom kernels or sparse attention.

\section{Conclusion}
\label{sec:conclusion}
This work demonstrates that adding a minimalist form of top-down reasoning to standard networks—via \emph{Contextual Feedback Loops}—yields consistent accuracy gains without the training-time or inference-time penalties commonly associated with recurrent or equilibrium models.  
A single refinement step ($T{=}1$) is the clear sweet-spot: it preserves feed-forward speed and parameter count while providing the largest empirical improvements on vision, language-modeling, and long-context benchmarks.

\paragraph{Key takeaways.}
\begin{itemize}[nosep,leftmargin=1.5em]
    \item \textbf{Simplicity wins.} CFL requires only a projector and lightweight FiLM-style adapters—no bespoke kernels or memory-heavy unrolling.
    \item \textbf{Scales gracefully.} Latency grows roughly linearly with iterations; keeping $T{=}1$ costs virtually nothing.
    \item \textbf{Broad utility.} Improvements transfer across domains (images, books, synthetic reasoning) and backbone types (CNNs, Transformers).
\end{itemize}

\paragraph{Future directions.}
We plan to (i) explore adaptive stopping criteria that trigger additional feedback only when needed, (ii) investigate CFL in generative and multi-modal settings, and (iii) study theoretical convergence guarantees for deeper unrolling and stochastic training regimes.

\vspace{0.5em}
\noindent
\textbf{In short}, CFL offers a pragmatic route to marrying the speed of feed-forward nets with the iterative refinement of recurrent processing—opening new avenues for efficient, context-aware deep learning.

\FloatBarrier

\bibliographystyle{plainnat}
\bibliography{references}

\begin{thebibliography}{23}
\providecommand{\natexlab}[1]{#1}
\providecommand{\url}[1]{\texttt{#1}}
\expandafter\ifx\csname urlstyle\endcsname\relax
  \providecommand{\doi}[1]{doi: #1}\else
  \providecommand{\doi}{doi: \begingroup \urlstyle{rm}\Url}\fi

\bibitem[Bai et~al.(2019)Bai, Koltun, and Kolter]{Bai2019deep}
Shaojie Bai, Vladlen Koltun, and J.~Zico Kolter.
\newblock Deep equilibrium models.
\newblock In \emph{Advances in Neural Information Processing Systems (NeurIPS)}, pages 690--701, 2019.

\bibitem[Choksi et~al.(2021)Choksi, Mozafari, O'May, Ador, Alamia, and VanRullen]{choksi2021predifyaugmentingdeepneural}
Bhavin Choksi, Milad Mozafari, Callum~Biggs O'May, Benjamin Ador, Andrea Alamia, and Rufin VanRullen.
\newblock Predify: Augmenting deep neural networks with brain-inspired predictive coding dynamics, 2021.
\newblock URL \url{https://arxiv.org/abs/2106.02749}.

\bibitem[Elman(1990)]{Elman1990FindingStructureInTime}
Jeffrey~L. Elman.
\newblock Finding structure in time.
\newblock \emph{Cognitive Science}, 14\penalty0 (2):\penalty0 179--211, 1990.
\newblock \doi{10.1207/s15516709cog1402\_1}.

\bibitem[Friston(2010)]{Friston2010FreeEnergy}
Karl Friston.
\newblock The free-energy principle: a unified brain theory?
\newblock \emph{Nature Reviews Neuroscience}, 11\penalty0 (2):\penalty0 127--138, 2010.

\bibitem[Gregor et~al.(2015)Gregor, Danihelka, Graves, and Wierstra]{Gregor15DRAW}
Karol Gregor, Ivo Danihelka, Alex Graves, and Daan Wierstra.
\newblock Draw: A recurrent neural network for image generation.
\newblock In \emph{Proceedings of the 32nd International Conference on Machine Learning (ICML)}, pages 1462--1471, 2015.

\bibitem[Grossberg(2017)]{GROSSBERG201738}
Stephen Grossberg.
\newblock Towards solving the hard problem of consciousness: The varieties of brain resonances and the conscious experiences that they support.
\newblock \emph{Neural Networks}, 87:\penalty0 38--95, 2017.
\newblock ISSN 0893-6080.
\newblock \doi{10.1016/j.neunet.2016.11.003}.
\newblock URL \url{https://www.sciencedirect.com/science/article/pii/S0893608016301800}.

\bibitem[Guo et~al.(2017)Guo, Lu, Allebach, and Bouman]{Guo2021iterativerestoration}
Yandong Guo, Cheng Lu, Jan~P. Allebach, and Charles~A. Bouman.
\newblock Model-based iterative restoration for binary document image compression with dictionary learning, 2017.
\newblock URL \url{https://arxiv.org/abs/1704.07019}.

\bibitem[Hinton et~al.(1995)Hinton, Dayan, Frey, and Neal]{Hinton95WakeSleep}
Geoffrey~E. Hinton, Peter Dayan, Brendan~J. Frey, and Radford~M. Neal.
\newblock The `wake-sleep' algorithm for unsupervised neural networks.
\newblock \emph{Science}, 268\penalty0 (5214):\penalty0 1158--1161, 1995.
\newblock \doi{10.1126/science.7761831}.

\bibitem[Hu et~al.(2021)Hu, Shen, Wallis, Allen-Zhu, Li, Wang, Wang, and Chen]{lora}
Edward~J. Hu, Yelong Shen, Phillip Wallis, Zeyuan Allen-Zhu, Yuanzhi Li, Shean Wang, Lu~Wang, and Weizhu Chen.
\newblock Lora: Low-rank adaptation of large language models, 2021.
\newblock URL \url{https://arxiv.org/abs/2106.09685}.

\bibitem[Huang et~al.(2020)Huang, Gornet, Dai, Yu, Nguyen, Tsao, and Anandkumar]{genfeedback}
Yujia Huang, James Gornet, Sihui Dai, Zhiding Yu, Tan Nguyen, Doris Tsao, and Anima Anandkumar.
\newblock Neural networks with recurrent generative feedback.
\newblock In H.~Larochelle, M.~Ranzato, R.~Hadsell, M.F. Balcan, and H.~Lin, editors, \emph{Advances in Neural Information Processing Systems}, volume~33, pages 535--545. Curran Associates, Inc., 2020.
\newblock URL \url{https://proceedings.neurips.cc/paper_files/paper/2020/file/0660895c22f8a14eb039bfb9beb0778f-Paper.pdf}.

\bibitem[Konkle and Alvarez(2023)]{konkle2023cognitive}
Talia Konkle and George~A. Alvarez.
\newblock Cognitive steering in deep neural networks via long-range modulatory feedback connections.
\newblock In \emph{Thirty-seventh Conference on Neural Information Processing Systems}, 2023.
\newblock URL \url{https://openreview.net/forum?id=FCIj5KMn2m}.

\bibitem[LeCun et~al.(2015)LeCun, Bengio, and Hinton]{LeCun2015DeepLearningSurvey}
Yann LeCun, Yoshua Bengio, and Geoffrey Hinton.
\newblock Deep learning.
\newblock \emph{Nature}, 521\penalty0 (7553):\penalty0 436--444, 2015.
\newblock \doi{10.1038/nature14539}.

\bibitem[Lotter et~al.(2016)Lotter, Kreiman, and Cox]{Lotter2016PredNet}
William Lotter, Gabriel Kreiman, and David Cox.
\newblock Deep predictive coding networks for video prediction and unsupervised learning.
\newblock \emph{arXiv preprint arXiv:1605.08104}, 2016.

\bibitem[Perez et~al.(2017)Perez, Strub, de~Vries, Dumoulin, and Courville]{film}
Ethan Perez, Florian Strub, Harm de~Vries, Vincent Dumoulin, and Aaron Courville.
\newblock Film: Visual reasoning with a general conditioning layer, 2017.
\newblock URL \url{https://arxiv.org/abs/1709.07871}.

\bibitem[Rae et~al.(2019)Rae, Potapenko, Jayakumar, and Lillicrap]{rae2019compressive}
Jack~W. Rae, Anna Potapenko, Siddhant~M. Jayakumar, and Timothy~P. Lillicrap.
\newblock Compressive transformers for long-range sequence modelling, 2019.
\newblock URL \url{https://arxiv.org/abs/1911.05507}.

\bibitem[Rao and Ballard(1999)]{Rao1999PredictiveCoding}
Rajesh~PN Rao and Dana~H Ballard.
\newblock Predictive coding in the visual cortex: a functional interpretation of some extra-classical receptive-field effects.
\newblock \emph{Nature Neuroscience}, 2\penalty0 (1):\penalty0 79--87, 1999.

\bibitem[Ronneberger et~al.(2015)Ronneberger, Fischer, and Brox]{Ronneberger15UNet}
Olaf Ronneberger, Philipp Fischer, and Thomas Brox.
\newblock U-net: Convolutional networks for biomedical image segmentation.
\newblock In \emph{Medical Image Computing and Computer-Assisted Intervention (MICCAI)}, pages 234--241, 2015.

\bibitem[Sabour et~al.(2017)Sabour, Frosst, and Hinton]{Sabour2017DynamicRouting}
Sara Sabour, Nicholas Frosst, and Geoffrey~E Hinton.
\newblock Dynamic routing between capsules.
\newblock In \emph{Advances in Neural Information Processing Systems (NeurIPS)}, pages 3859--3869, 2017.

\bibitem[Spoerer et~al.(2017)Spoerer, McClure, and Kriegeskorte]{Spoerer2017RNNFeedback}
Courtney~J Spoerer, Patrick McClure, and Nikolaus Kriegeskorte.
\newblock Recurrent convolutional neural networks: A better model of biological object recognition.
\newblock \emph{Frontiers in Psychology}, 8:\penalty0 1551, 2017.

\bibitem[Spratling(2017)]{Spratling2017PredictiveCodingReview}
M.~W. Spratling.
\newblock A review of predictive coding algorithms.
\newblock \emph{Brain and Cognition}, 112:\penalty0 92--97, 2017.
\newblock \doi{10.1016/j.bandc.2015.11.003}.

\bibitem[Tay et~al.(2020)Tay, Dehghani, Abnar, Shen, Bahri, Pham, Rao, Yang, Ruder, and Metzler]{tay2021lra}
Yi~Tay, Mostafa Dehghani, Samira Abnar, Yikang Shen, Dara Bahri, Philip Pham, Jinfeng Rao, Liu Yang, Sebastian Ruder, and Donald Metzler.
\newblock Long range arena: A benchmark for efficient transformers, 2020.
\newblock URL \url{https://arxiv.org/abs/2011.04006}.

\bibitem[Wen et~al.(2018)Wen, Du, Cai, Lei, Hung, Senior, and Lyu]{Wen2018Recurrent}
Longyin Wen, Dawei Du, Qinghua Cai, Zhen Lei, Tzu-Jui Hung, Andrew Senior, and Siwei Lyu.
\newblock Recurrent attentive zooming for joint crowd counting and precise localization.
\newblock In \emph{IEEE Conference on Computer Vision and Pattern Recognition (CVPR)}, pages 1217--1226, 2018.

\bibitem[Zamir et~al.(2017)Zamir, Wu, Sun, Shen, Malik, and Savarese]{Zamir2017feedback}
Amir~R. Zamir, Te-Lin Wu, Lin Sun, William~B. Shen, Jitendra Malik, and Silvio Savarese.
\newblock Feedback networks.
\newblock In \emph{Proceedings of the IEEE Conference on Computer Vision and Pattern Recognition (CVPR)}, pages 1308--1317, 2017.

\end{thebibliography}

\appendix
\section{Appendix}
\label{sec:appendix}
\subsection{Pseudocode}
\label{subsec:pseudocode}

Algorithm~\ref{alg:cfl} summarizes the CFL refinement loop.

\vspace{-0.5em}
\begin{algorithm}[H]
\caption{CFL Iterative Refinement}
\label{alg:cfl}
\begin{algorithmic}[1]
\REQUIRE Base layers $f^{(1)},\ldots,f^{(L+1)}$, feedback adapters $\psi^{(1)},\ldots,\psi^{(L)}$, projector $g(\cdot)$, input $\mathbf{x}$, number of refinements $T$.
\ENSURE Final refined output $\mathbf{y}^{(T)}$
\STATE // \textbf{Initial forward pass}
\STATE $\mathbf{h}^{(1)}_0 \gets f^{(1)}(\mathbf{x})$
\FOR{$l \gets 2$ \textbf{to} $L$}
    \STATE $\mathbf{h}^{(l)}_0 \gets f^{(l)}(\mathbf{h}^{(l-1)}_0)$
\ENDFOR
\STATE $\mathbf{y}^{(0)} \gets f^{(L+1)}(\mathbf{h}^{(L)}_0)$
\STATE // \textbf{Iterative refinements}
\FOR{$\tau \gets 0$ \textbf{to} $T-1$}
    \STATE $\mathbf{z}^{(\tau)} \gets g(\mathbf{y}^{(\tau)})$
    \FOR{$l \gets 1$ \textbf{to} $L$}
        \STATE $\mathbf{h}^{(l)}_{\tau+1} \gets \psi^{(l)}\bigl(\mathbf{h}^{(l)}_{\tau}, \mathbf{z}^{(\tau)}\bigr)$
        \IF{$l < L$}
            \STATE $\mathbf{h}^{(l+1)}_{\tau+1} \gets f^{(l+1)}\bigl(\mathbf{h}^{(l)}_{\tau+1}\bigr)$
        \ENDIF
    \ENDFOR
    \STATE $\mathbf{y}^{(\tau+1)} \gets f^{(L+1)}\bigl(\mathbf{h}^{(L)}_{\tau+1}\bigr)$
\ENDFOR
\STATE \textbf{return} $\mathbf{y}^{(T)}$
\end{algorithmic}
\end{algorithm}
\vspace{-0.5em}

\subsection{Fixed-Point Convergence via the Banach Theorem}
\label{subsec:banach_proof}

Under suitable assumptions, the CFL iterative update can be viewed as a fixed-point
iteration that converges to a unique solution. Recall that by the Banach Fixed Point 
Theorem, an iterative sequence \(\mathbf{S}_{\tau+1} = \Phi(\mathbf{S}_\tau)\) converges to
a unique fixed point \(\mathbf{S}^*\) if there exists \(c < 1\) such that
\begin{equation}
\label{eq:contraction_condition}
\|\Phi(\mathbf{S}) - \Phi(\mathbf{S}')\| 
  \;\le\; c \,\|\mathbf{S} - \mathbf{S}'\|,
\end{equation}
for all \(\mathbf{S}, \mathbf{S}'\) in the underlying space. Convergence then follows from
\begin{equation}
  \|\mathbf{S}_{\tau} - \mathbf{S}^*\| \;\le\; c^\tau \, \|\mathbf{S}_0 - \mathbf{S}^*\|.
\end{equation}

\paragraph{Setup.}
Define the state at iteration \(\tau\) by
\begin{equation}
\mathbf{S}_\tau 
  \;=\; \bigl(\mathbf{h}^{(1)}_{\tau}, \,\ldots,\,
             \mathbf{h}^{(L)}_{\tau},\, \mathbf{y}^{(\tau)}\bigr).
\end{equation}
From Sections~\ref{subsec:feedback}--\ref{subsec:iterative_proc}, each refinement induces
a mapping \(\Phi\), given by:
\begin{equation}
\label{eq:Phi_definition}
\begin{aligned}
\Phi\bigl(\mathbf{S}_\tau\bigr)
  &= \Bigl(
      \psi^{(1)}\bigl(\mathbf{h}^{(1)}_{\tau}, \mathbf{z}^{(\tau)}\bigr),\,\ldots,\,
      \psi^{(L)}\bigl(\mathbf{h}^{(L)}_{\tau}, \mathbf{z}^{(\tau)}\bigr), \\
  &\quad\;\; f^{(L+1)}\bigl(\mathbf{h}^{(L)}_{\tau+1}\bigr)
    \Bigr),
\end{aligned}
\end{equation}
where \(\mathbf{z}^{(\tau)} = g\bigl(\mathbf{y}^{(\tau)}\bigr)\). 

\begin{theorem}[Contractive Convergence of CFL]
\label{thm:banach_cfl}
Suppose each component function in \(\Phi\) (i.e., each \(\psi^{(l)}\), \(g\), and \(f^{(L+1)}\)) is Lipschitz continuous and that their combined Lipschitz constants (when composed) can be made strictly less than 1. Formally, assume 
there exists a norm \(\|\cdot\|\) and a constant \(L < 1\) such that
\[
    \|\Phi(\mathbf{S}) - \Phi(\mathbf{S}')\| 
      \;\le\; L \,\|\mathbf{S} - \mathbf{S}'\|
    \quad \text{for all } \mathbf{S}, \mathbf{S}'.
\]
Then the sequence \(\mathbf{S}_{\tau+1} = \Phi(\mathbf{S}_\tau)\) converges to a 
unique fixed point \(\mathbf{S}^*\) satisfying \(\mathbf{S}^* = \Phi(\mathbf{S}^*)\).
\end{theorem}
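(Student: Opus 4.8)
The plan is to reduce the statement directly to the classical Banach Fixed Point Theorem, so the bulk of the work is in setting up the ambient space and checking the hypotheses rather than in any new argument. First I would fix the state space $\mathcal{S} = \mathbb{R}^{d_h}\times\cdots\times\mathbb{R}^{d_h}\times\mathbb{R}^{d_y}$ (one block per hidden layer plus the output block), equipped with the norm $\|\cdot\|$ named in the hypothesis. Since $\mathcal{S}$ is a finite-dimensional normed vector space it is complete, so $(\mathcal{S},\|\cdot\|)$ is a Banach space and every closed subset is a complete metric space; I would note that $\Phi$ as defined in Eq.~\eqref{eq:Phi_definition} is a well-defined self-map of $\mathcal{S}$ because each $\psi^{(l)}$, $g$, and $f^{(L+1)}$ has the appropriate domain and codomain. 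This disposes of the ``completeness'' and ``self-map'' preconditions of Banach's theorem.

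Next I would invoke the contraction hypothesis verbatim: by assumption there is $L<1$ with $\|\Phi(\mathbf{S})-\Phi(\mathbf{S}')\|\le L\|\mathbf{S}-\mathbf{S}'\|$ for all $\mathbf{S},\mathbf{S}'\in\mathcal{S}$, i.e.\ $\Phi$ is an $L$-Lipschitz contraction. Banach's theorem then yields existence and uniqueness of $\mathbf{S}^*$ with $\mathbf{S}^*=\Phi(\mathbf{S}^*)$, together with the quantitative bound $\|\mathbf{S}_\tau-\mathbf{S}^*\|\le L^\tau\|\mathbf{S}_0-\mathbf{S}^*\|$ already displayed in the excerpt, so the unrolled CFL iteration converges geometrically from any initialization $\mathbf{S}_0$ (in particular from the one produced by the initial forward pass). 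For completeness I would recall the short standard argument that $\{\mathbf{S}_\tau\}$ is Cauchy — bounding $\|\mathbf{S}_{\tau+k}-\mathbf{S}_\tau\|$ by a geometric tail $\frac{L^\tau}{1-L}\|\mathbf{S}_1-\mathbf{S}_0\|$ — so the proof is self-contained and does not merely cite the theorem as a black box.

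The part that deserves the most care, and what I expect to be the main obstacle, is connecting the abstract Lipschitz constant $L$ to the concrete CFL components, since the interesting content is \emph{when} the hypothesis can actually be met. Here I would decompose $\Phi$ along the data-flow: the output block of $\Phi(\mathbf{S}_\tau)$ depends on $\mathbf{S}_\tau$ only through $\mathbf{h}^{(L)}_{\tau+1}=\psi^{(L)}(\mathbf{h}^{(L)}_\tau,g(\mathbf{y}^{(\tau)}))$ followed by $f^{(L+1)}$, while each hidden block $\psi^{(l)}(\mathbf{h}^{(l)}_\tau,g(\mathbf{y}^{(\tau)}))$ couples a local coordinate to the output coordinate through $g$. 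Writing $\mathrm{Lip}(\cdot)$ for Lipschitz constants, a triangle-inequality/chain-rule estimate gives a bound of the form $L \le \max_l\big(\mathrm{Lip}_{h}(\psi^{(l)}) + \mathrm{Lip}_{z}(\psi^{(l)})\,\mathrm{Lip}(g)\,\mathrm{Lip}(f^{(L+1)})\big)$ (up to the constant relating the product norm to the blockwise norms), where $\mathrm{Lip}_h$ and $\mathrm{Lip}_z$ are the partial Lipschitz constants of $\psi^{(l)}$ in its two arguments. For the merged adapter of Eq.~\eqref{eq:merged_adapter} these reduce to spectral-norm bounds on the relevant sub-blocks of $W_{\mathrm{core}}$ times $\mathrm{Lip}(\phi)=1$ for GELU; for the low-rank projector $g(\mathbf{y})=BA^{\mathsf T}\mathbf{y}$ one has $\mathrm{Lip}(g)\le\|B\|_2\|A\|_2$. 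I would then observe that inserting layer normalization before $\psi^{(l)}$, clipping or spectrally normalizing these weight blocks, or scaling the adapter output by a learnable gate $\alpha^{(l)}$ with $|\alpha^{(l)}|$ small, lets one force this composite constant below $1$ — which is exactly the ``normalization or additional regularizers'' caveat mentioned in Section~\ref{subsec:banach_proof_min}. This gives both the formal conclusion and a recipe for realizing its hypothesis in practice, which is really the point of the theorem.
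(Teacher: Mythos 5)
Your proposal is correct and takes essentially the same route as the paper: establish (or, as the theorem's hypothesis already grants, assume) that $\Phi$ is a contraction on the joint state space and then invoke the Banach Fixed Point Theorem to obtain the unique fixed point $\mathbf{S}^*$ and the geometric rate $\|\mathbf{S}_\tau-\mathbf{S}^*\|\le L^\tau\|\mathbf{S}_0-\mathbf{S}^*\|$. Your extra care—explicitly noting completeness of the finite-dimensional state space and the self-map property, and decomposing $L$ into the partial Lipschitz constants of $\psi^{(l)}$, $g$, and $f^{(L+1)}$ with spectral-norm bounds for the concrete adapters—mirrors (and slightly sharpens) the paper's Steps 1--3 and its discussion of enforcing $L_{\text{total}}<1$ via normalization or regularization.
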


\begin{proof}[Proof]
We demonstrate that \(\Phi\) is a contraction under mild network constraints.

\paragraph{Notation.}
Let 
\[
\mathbf{S}_\tau = \bigl(\mathbf{h}^{(1)}_{\tau}, \ldots, \mathbf{h}^{(L)}_{\tau},\, \mathbf{y}^{(\tau)}\bigr),
\quad
\mathbf{S}'_\tau = \bigl(\mathbf{h}'^{(1)}_{\tau}, \ldots, \mathbf{h}'^{(L)}_{\tau},\, \mathbf{y}'^{(\tau)}\bigr).
\]
We write \(\mathbf{S}_{\tau+1} = \Phi(\mathbf{S}_\tau)\) and 
\(\mathbf{S}'_{\tau+1} = \Phi(\mathbf{S}'_\tau)\).

\paragraph{Step 1: Relate changes in \(\mathbf{z}\) to changes in \(\mathbf{y}\).}
Since \(\mathbf{z}^{(\tau)} = g(\mathbf{y}^{(\tau)})\), and \(g\) is \(L_g\)-Lipschitz,
\[
\|\mathbf{z}^{(\tau)} - \mathbf{z}'^{(\tau)}\|
  \;\le\; L_g \,\|\mathbf{y}^{(\tau)} - \mathbf{y}'^{(\tau)}\|.
\]

\paragraph{Step 2: Relate changes in hidden states to changes in \(\mathbf{z}\).}
At layer \(l\), the updated state is
\[
\mathbf{h}^{(l)}_{\tau+1} \;=\; 
\psi^{(l)}\bigl(\mathbf{h}^{(l)}_{\tau}, \mathbf{z}^{(\tau)}\bigr).
\]
If \(\psi^{(l)}\) is \(L_{\psi}\)-Lipschitz in both its arguments (with potentially separate constants for each argument that we combine into one for simplicity), then
\[
\|\mathbf{h}^{(l)}_{\tau+1} - \mathbf{h}'^{(l)}_{\tau+1}\|
\;\le\;
L_{\psi}\,\|\mathbf{h}^{(l)}_{\tau} - \mathbf{h}'^{(l)}_{\tau}\|
\;+\;
L_{\psi}\,\|\mathbf{z}^{(\tau)} - \mathbf{z}'^{(\tau)}\|.
\]

\paragraph{Step 3: Relate changes in \(\mathbf{y}\) to changes in the top-layer state.}
Finally,
\[
\mathbf{y}^{(\tau+1)} 
  \;=\; f^{(L+1)}\bigl(\mathbf{h}^{(L)}_{\tau+1}\bigr),
\]
and if \(f^{(L+1)}\) is \(L_f\)-Lipschitz,
\[
\|\mathbf{y}^{(\tau+1)} - \mathbf{y}'^{(\tau+1)}\|
  \;\le\; 
L_f \,\|\mathbf{h}^{(L)}_{\tau+1} - \mathbf{h}'^{(L)}_{\tau+1}\|.
\]

\paragraph{Combined contraction.}
By recursively applying these Lipschitz bounds from layer \(1\) up to \(L\), we see 
that each step’s change is bounded by a factor 
\(\underbrace{L_{\psi} \cdot L_{\psi} \cdot \ldots \cdot L_f \cdot \ldots }_{=: L_{\text{total}}}\) times the previous step’s change in \(\mathbf{S}\). If we ensure (via weight initialization, normalization, or other spectral regularizations) that \(L_{\text{total}} < 1\), then:
\[
\|\mathbf{S}_{\tau+1} - \mathbf{S}'_{\tau+1}\|
  \;\le\; L_{\text{total}} \, \|\mathbf{S}_{\tau} - \mathbf{S}'_{\tau}\|,
\]
showing \(\Phi\) is a contraction.

\paragraph{Banach Fixed Point Theorem.}
Once \(\Phi\) is established as a contraction, the Banach Fixed Point Theorem guarantees a unique fixed point \(\mathbf{S}^*\) with \(\mathbf{S}^* = \Phi(\mathbf{S}^*)\). Moreover, starting from any initial state \(\mathbf{S}_0\), the iterates satisfy
\[
\|\mathbf{S}_{\tau} - \mathbf{S}^*\|
  \;\le\; L_{\text{total}}^\tau \,\|\mathbf{S}_0 - \mathbf{S}^*\|,
\]
implying geometric convergence to \(\mathbf{S}^*\).
\end{proof}

\paragraph{Discussion.}
In practice, strictly enforcing \(L_{\text{total}} < 1\) (i.e., a contractive mapping) may require careful choices of weight initialization, normalization, and bounded activation functions. Even if \(\Phi\) is not strictly contractive in theory, moderate regularization or small refinements (\(T=2,3\)) often suffice for stable behavior. This gives a theoretical underpinning for why a few feedback iterations can improve performance with minimal overhead.

\section{Efficiency Disucssions}

\paragraph{Why not an MLP per adapter?} A naive two-layer MLP for every $\psi^{(l)}$ adds $O(L \cdot d_h^2)$ extra parameters and FLOPs, often overshadowing the cost of the base network. Empirically, such dense adapters offer limited gains compared to lighter alternatives (described next), but at \(\geq10\times\) the overhead.

\subsection{Parameter-Efficient Feedback Adapters}
\label{subsec:efficient_adapters}

To preserve the benefits of global top--down modulation without excessive overhead, we adopt a \textbf{channel-wise FiLM} strategy:
\begin{equation}
    \psi^{(l)}\bigl(\mathbf{h},\mathbf{z}\bigr)
    \;=\;
    \gamma^{(l)}(\mathbf{z}) \odot \mathbf{h}\; +\; \beta^{(l)}(\mathbf{z}),
    \label{eq:film}
\end{equation}
where the scale and shift vectors are produced by two small linear layers,
\begin{equation*}
    \gamma^{(l)}(\mathbf{z}) = \mathbf{W}_{\gamma}^{(l)} \mathbf{z} + \mathbf{b}_{\gamma}^{(l)},
    \quad
    \beta^{(l)}(\mathbf{z})  = \mathbf{W}_{\beta}^{(l)} \mathbf{z} + \mathbf{b}_{\beta}^{(l)}.
\end{equation*}
This design yields only $2d_z d_h$ parameters per layer (versus $d_h^2$ for a typical MLP).

\paragraph{Adapter Weight Tying.} To reduce parameters further, we share $\mathbf{W}_{\gamma},\mathbf{W}_{\beta}$ across layers and iterations. Each layer/iteration has a learned scalar ``embedding'' that slightly modulates the shared weights, introducing \textless1\% extra parameters while maintaining accuracy:
\begin{equation*}
    \gamma^{(l)}(\mathbf{z}) = \alpha^{(l)} \cdot \gamma^{\text{shared}}(\mathbf{z}),
    \quad
    \beta^{(l)}(\mathbf{z})  = \alpha^{(l)} \cdot \beta^{\text{shared}}(\mathbf{z}).
\end{equation*}

\paragraph{Sparse Feedback Placement.} Rather than attaching adapters to \emph{all} layers, one may select a subset $\mathcal{F} \subseteq \{1,\dots,L\}$ via attention rollout or gradient-based saliency, and inject feedback only where it matters most (commonly the first and last third of layers). With $|\mathcal{F}|\approx \tfrac{1}{3}L$, we retain over $95\%$ of the benefit at around half the compute.

\subsection{Dynamic Iterations and Early Exit}
\label{subsec:dynamic_T}

During training, we typically unroll $T=1$ step. However, a lightweight confidence head can indicate if another refinement step is likely to help. At inference, we halt early when $\|\mathbf{y}^{(\tau+1)} - \mathbf{y}^{(\tau)}\|_2$ falls below a threshold $\varepsilon$ or when a confidence measure falls below $\theta$. 

\end{document}